\newtheorem{theorem}{Theorem}[section]
\newtheorem{lemma}[theorem]{Lemma}
\icmltitlerunning{NodeDrop: A Method for Finding Sufficient Network Architecture Size with Improved Generalization}
\begin{document}

\twocolumn[
\icmltitle{NodeDrop: A Condition for Reducing Network Size without Effect on Output}



\icmlsetsymbol{equal}{*}

\begin{icmlauthorlist}
\icmlauthor{L. Jensen}{equal,bu}
\icmlauthor{J. Harer}{equal,bu}
\icmlauthor{S. Chin}{bu,cbmm,cmsa,bbn}
\end{icmlauthorlist}

\icmlaffiliation{bu}{Dept. of Computer Science, Boston University}
\icmlaffiliation{cbmm}{Center for Brains, Minds and Machines (CBMM),  Dept. of Brain and Cognitive Sciences, MIT}
\icmlaffiliation{cmsa}{Center of Mathematical Sciences and Applications (CMSA), Harvard}
\icmlaffiliation{bbn}{BBN Technologies}

\icmlcorrespondingauthor{Louis Jensen}{ljjensen@bu.edu}
\icmlcorrespondingauthor{Jacob Harer}{jharer@bu.edu}

\icmlkeywords{Machine Learning, Deep Learning}

\vskip 0.3in
]



\printAffiliationsAndNotice{\icmlEqualContribution} 

\begin{abstract}

Determining an appropriate number of features for each layer in a neural network is an important and difficult task. This task is especially important in applications on systems with limited memory or processing power. Many current approaches to reduce network size either utilize iterative procedures, which can extend training time significantly, or require very careful tuning of algorithm parameters to achieve reasonable results. In this paper we propose NodeDrop, a new method for eliminating features in a network. With NodeDrop, we define a condition to identify and guarantee which nodes carry no information, and then use regularization to encourage nodes to meet this condition. We find that NodeDrop drastically reduces the number of features in a network while maintaining high performance, reducing the number of parameters by a factor of $114$x for a VGG like network on CIFAR10 without a drop in accuracy.


\end{abstract}

\section{Introduction}
\label{Intro}
A prime difficulty in neural network design is the appropriate tuning of network architectures. Choosing a size for each layer of a neural network is usually done by rough estimate, trial, and error. This imprecise process can often lead to network designs larger than needed to perform a particular task. Although the capacity for training large and complex networks grows with improving graphics processing unit (GPU) technology, designing too large a network can result in applications impracticable for general hardware use. Mobile devices and embedded systems limit compute, memory, and storage consumption, and as a result can only run small, minimally designed networks. A designer aiming to create such a minimal network is faced with the time-consuming task of manually tuning the number of neurons in each layer. This tuning process can result in many extended tuning experiments in order to balance the space and performance of the neural network.

The issues involved with using deep neural networks (DNN's) on constrained systems has inspired significant research. One interesting area of research is the design of systems which can automatically prune a network's parameters. Ideally these techniques can still maintain high performance while pruning as many parameters as possible, ensuring the network can fit on smaller systems. Many state-of-the-art methods for network pruning generally involve an iterative process of repeatedly pausing training, pruning parameters, and resuming training  in order for the network to reconverge. Such iterative procedures can lead to long training times. Other techniques use regularization in order to eliminate nodes. The final performance of these networks are often highly variable with the hyper-parameters of the algorithm. Thus, while these techniques do offer parameter reduction benefits, the network designer will still be faced with similar difficulties as before: a time consuming training process and a potential hyper-parameter tuning headache.

We address the problem of parameter reduction with our novel NodeDrop technique, which prunes the network during training. The NodeDrop technique only drops nodes which carry no information and drops them fluidly during the training process.


First, we formally define the conditions necessary to guarantee a neuron carries no information. We then propose a simple variant of $L_1$ regularization which drives nodes toward this condition. Second, we extend the NodeDrop technique to networks which use batch normalization \cite{Ioffe_2015}. 
We test our technique on modern architectures for the MNIST, CIFAR10, and CIFAR100 datasets, and show that we are able to drop a significant number of nodes without a loss in performance. Our method requires no iterative retraining and only a modest increase in training time. We demonstrate effective results with a wide range of hyperparamaters, indicating our method does not require precise hyperparameter tuning. At best case we produce a network which reduces the number of parameters by $93.27$, $99.12$, and $87.82$ percent for MNIST, CIFAR10, and CIFAR100 respectively, with no perceivable loss in performance.



\section{Related Works}
\label{Related_Works}

\subsection{Pruning}
Network pruning comprises a set of techniques which take a pretrained network and then prune off connections using some heuristic. This is usually followed by a retraining of the network and sometimes by an iterative process of pruning and retraining the network several times. Pruning techniques first appeared in the 1990s, with the first instances using second order gradients of connections to determine which neurons should be pruned \cite{Hassibi_1993, LeCun_1990, Reed_1993}. More recent approaches have taken on a wide array of methods for determining which connections should be pruned. These approaches include correlation \cite{Sun_2015, Han_2016, Srinivas_2015}, regularization \cite{Han_2015, Li_2017}, particle filtering on misclassification rate \cite{Anwar_2017}, low rank approximation \cite{Denton_2014}, vector quantization \cite{Gong_2014} and tensor decomposition \cite{Kim_2015}.

All network pruning techniques suffer from extended training time due to the iterative retraining of the network. This can lengthen training times significantly, and often makes tuning the various parameters in each method a lengthy chore. 

\subsection{Regularization}
A more recently developed approach to network parameter reduction is to disable parameters through regularization. A majority of these techniques have focused on the sparsification of network connections using a group sparsity approach \cite{Wen_2016, Zhou_2016, Alvarez_2016, Lebedev_2016}. This involves grouping the weights for every neuron and attempting to sparsify each group by penalizing its $L_2$ norm. These techniques require all weights to be driven to zero before a node can be guaranteed to carry no information. In practice nodes are removed based on a threshold since this guarantee is difficult to meet. Because of this, regularization methods can be difficult to use as they require very precise tuning of the regularization and threshold terms.

The most similar technique to ours, Liu et al. \cite{Liu_2017},  uses $L_1$ regularization to drive the scale parameter in batch norm, $\gamma$, towards zero. This is similar in principle to our own experiments with batch norm. However, Liu et al. requires retraining after pruning in order to reconverge. We provide a more absolute condition to guarantee a node is off, eliminating the need for a retraining procedure and making node removal a more fluid process.

Our technique falls within the regularization category. Key differences in our approach involve special regularization of the bias for each neuron and a condition for node removal guaranteeing no effect on network output. Our condition is also more relaxed, utilizing the ``dead'' region in a node's activation function, instead of requiring the node's weights to be zero.

\subsection{Other approaches}
Several other approaches have appeared which do not fit into the categories of the previous two subsections. Many of these approaches focus on reducing precision as opposed to reducing the number of parameters. \cite{Hubara_2016, Vanhoucke_2011, Gupta_2015, Rastegari_2016}. As such, these approaches are largely orthogonal to our own work, and can be used in conjunction with our work in order to compound the reduction on memory and computation. One example of this approach is quantized and binarized neural networks \cite{Hubara_2016}, which take this approach to new levels by using $\{-1,1\}$ weights and an XOR to replace multiplication.

An additional noteworthy paper is that of Molchanov et al. \cite{Molchanov_2017_vd}. They achieve impressive results by sparsifing a network's connections during training using variational dropout. Again, in theory this work should be usable in conjunction with our own. 

\section{Methods}
\label{Methods}
\subsection{NodeDrop Condition}
In this section we describe the condition for identifying useless nodes in a network. Nodes in a neural network carry information by outputting values from some distribution. A node can only be useful if that node sometimes outputs a non-zero value. A node which is guaranteed to always output a constant value is a node which can only be used as an extra bias node for future layers. Moreover, if a node is guaranteed to always output the constant zero, this node is entirely useless and can be removed from the network without impact. This occurs in activation functions with a flat zero region. The popular rectified linear unit (ReLU) activation function contains such a flat zero region. This flat zero region causes the observed ``Dying ReLU'' effect, in which nodes become stuck in this flat region with zero gradients. We can therefore design a condition to identify when a node is useless by taking advantage of this effect. 


We propose the NodeDrop condition. 
\begin{enumerate}
\item Given a node with input vector $\vec{x} \in [0,1]^n$, a weight vector $\vec{w} \in \mathbb{R}^n$, bias b $\in \mathbb{R}$, and an activation function $\sigma$ such that $\sigma(v)=0 \,\, \forall v \leq 0$. 
\item We wish to find the condition under which this node is dead, $\sigma(\vec{w} \cdot \vec{x} + b )=0$ for all inputs $\vec{x}$. 
\item Since $\sigma(v)=0 \,\, \forall v \leq 0$, we simply need to find the condition under which $\vec{w} \cdot \vec{x} + b \leq 0$. We have constrained the inputs to be within $[0,1]$, $\vec{x} \in [0,1]^n$, so we have:
\[    \vec{w} \cdot \vec{x}+b  \leq \|\max(\vec{w},\vec{0})\|_1+b \leq \|\vec{w}\|_1 +b \]
\item Then, $\|\max(\vec{w}_i,0)\|_1+b \leq 0 \Rightarrow  \sigma(\vec{w} \cdot \vec{x} + b )=0$
\end{enumerate}
This leaves us with the NodeDrop condition:
\begin{equation} \label{eq:NodeDrop}
    \|\max(\vec{w}_i,0)\|_1+b \leq 0
\end{equation}


This condition can be applied to a fully connected layer, or in broader contexts such as filter weights of a convolutional layer. Because nodes which satisfy this condition are guaranteed to always output zero, they can be dropped from the network without affecting its output. Note that the weaker condition $\|\vec{w}\|_1 +b \leq 0$ can also be used, but identifies fewer nodes that can be dropped. 

The constraint on the activation function $\sigma(v)=0 \,\, \forall v \leq 0$ can be achieved using the standard ReLU activation of $\max(0, x)$. However ReLU does not guarantee that the output will fall between $0$ and $1$, a necessary condition if we want to apply NodeDrop to the next layer in the network. In the following section we will discuss an activation function for which the NodeDrop condition can be applied to both a layer and its following layer.

\subsection{Activation Function}
Supposing we want to apply NodeDrop to many or all layers of the network, we must use an activation function which possesses the appropriate flat zero region $(-\infty,0]$, and whose outputs are always between $0$ and $1$. The flat zero region guarantees the NodeDrop condition can be applied to the layer preceding activation, and the $[0,1]$ constraint on the output allows for the NodeDrop condition to be applied to the layer immediately following activation. These necessary constraints are reiterated below.

\begin{tabularx}{\columnwidth}{XX}
\vspace{-30pt}
\begin{equation} \label{eq:ActivFlat}
    \sigma(v)=0 \,\, \forall v \leq 0
\end{equation}
\vspace{-30pt}
&
\vspace{-30pt}
\begin{equation} \label{eq:Activ01}
    \sigma(v) \in [0,1] \,\, \forall v
\end{equation}
    \vspace{-30pt}
\end{tabularx}

If the outputs of a layer are guaranteed between 0 and 1 after activation, the inputs of the next layer will satisfy the conditions assumed in proving the NodeDrop condition. Many activation functions can satisfy these conditions, but none of the most popular activation functions satisfy both together. For example, the popular ReLU function satisfies the first condition in equation \ref{eq:ActivFlat} but not \ref{eq:Activ01}. Conversely, The popular sigmoid activation function satisfies equation \ref{eq:Activ01} but not \ref{eq:ActivFlat}.

One option is a clamped ReLU activation function, $\min(1, \max(0, x))$. This has two flat regions, $\sigma(v \geq 1) = 1, \sigma(v \leq 0) = 0$, and an intermediate region where $\sigma(v \in [0,1]) = v$. This does satisfy both of the NodeDrop conditions; however, we found that having two regions with zero gradients can lead to too many nodes being ``stuck'' at either $0$ or $1$ even at network initialization. Thus, we propose the SoftClampedReLU activation function, which is a combination between ReLU and inverted SoftPlus activations:
\begin{equation}
    \sigma(v)= max(0, 1- \frac{1}{\beta} log(1+e^{\beta (1-x)})
\end{equation} 

Intuitively this activation is much like a ClampedReLU, but has a soft gradient in the upper region. This upper region is not perfectly flat and so values do not become stuck at $\sigma(v)=1$. The $\sigma(v \leq 0)=0$ lower region is still perfectly flat, satisfying our flat region condition, equation (\ref{eq:ActivFlat}). This activation function is shown in figure \ref{fig:SoftClampedRelu}. In our experiments we use $\beta=10.0$.

\begin{figure}[H]
\vskip 0.0in
\begin{center}
\centerline{\includegraphics[width=0.6\columnwidth]{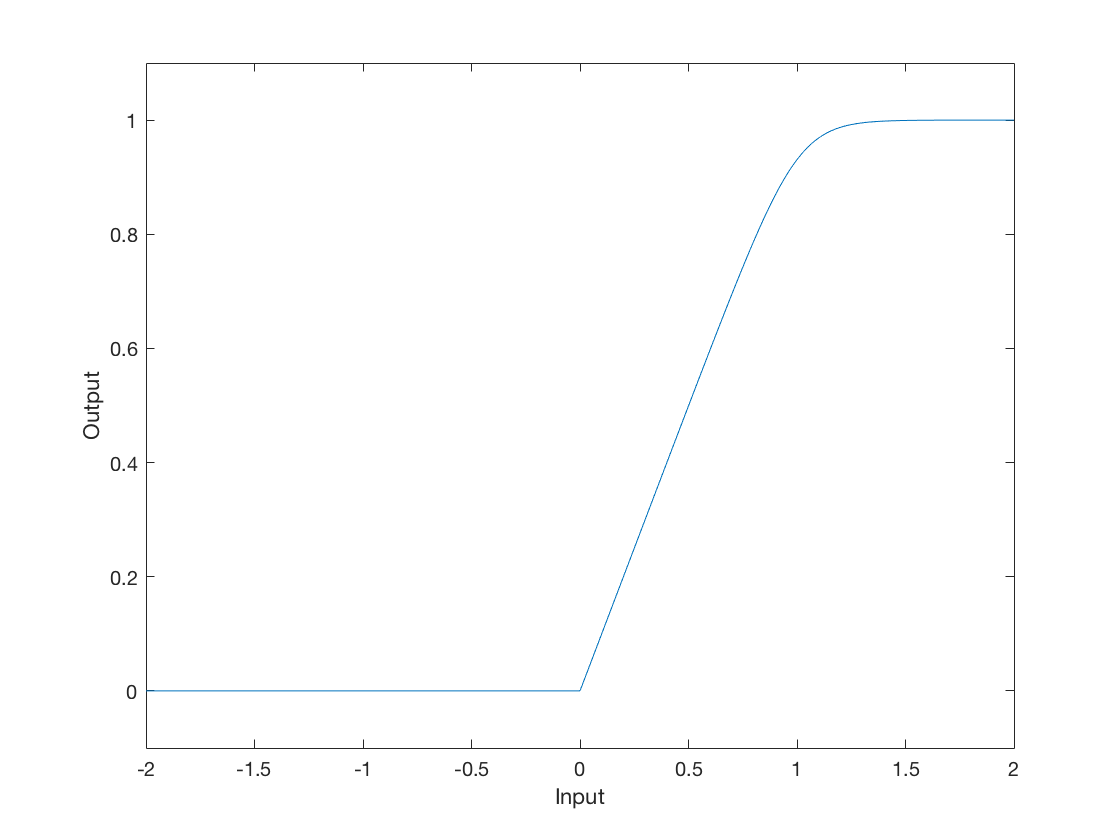}}
\caption{SoftClampedRelu activation function. Shown with $\beta=10$.}
\label{fig:SoftClampedRelu}
\end{center}
\vskip -0.2in
\end{figure}

\subsection{Regularization} \label{sec:reg}
The NodeDrop condition for identifying and eliminating useless nodes is powerful, but without encouragement, most trained networks will possess very few nodes satisfying the NodeDrop condition. Therefore, we add regularization during training to encourage features to satisfy the NodeDrop condition. 

To encourage $\|\max(\vec{w},\vec{0})\|_1+b = 0$ we can directly penalize its distance from zero:
\[ \lambda |\|\max(\vec{w},\vec{0})\|_1+b|\]
However, this is too close to the boundary of our dead region. 
Alternatively to encourage $\max(W,\vec{0})+b \leq 0$, we could penalize it directly:
\[\lambda ( \|\max(\vec{w},\vec{0})\|_1+b)\]
However, this causes the bias to tend toward negative infinity. 

Instead we penalize the distance from a negative constant, $-C$, given:
\[\lambda |\|\max(\vec{w},\vec{0})\|_1+b + C| = \lambda | \sum_i \max(\vec{w}_i, 0) + b + C|\]
This encourages $\|\max(\vec{w},\vec{0})\|_1+b = -C$, safely within the ``dead'' region, and without tending to negative infinity. As such, the choice of C is largely arbitrary; in our experiments we found a value of $1.0$ worked well, though other values worked just as well.

We can also write our regularization term as a small modification to standard $L_1$ regularization. For the case where $\|\max(\vec{w},\vec{0})\|_1+b \geq 0$, when a node is on, $\lambda | \|max(\vec{w}_i, 0)\|_1 + b + C| = \lambda (\|\max(\vec{w},\vec{0})\|_1 + \| b +C \|_1)$. We use this modified $L_1$ regularization given as:
\begin{equation}\label{eq:NodeDropRegularization}
    \lambda (\|\max(\vec{w},\vec{0})\|_1 + \| b +C \|_1)
\end{equation}
This is normal $L_1$ regularization with two adjustments. We use the $L_1$ norm of $\max(\vec{w},\vec{0})$ instead of $\vec{w}$ since this is a tighter bound given that $x \geq 0$. Instead of penalizing the bias as $\|b\|_1$, we penalize $\|b+C\|_1$. This modified $L_1$ regularization encourages biases to take bias values near $b=-C$, and weight values near 0. 

We use $L_1$ regularization because $L_2$ regularization does not work well in our context. For $L_2$ regularization on both the weights and the bias, it is cheaper to use multiple weights as a bias rather than the bias itself. That is, when $\sum_i^n w_i = b$, then $\|w\|_2 < \|b\|_2$.  This becomes worse when we penalize the distance of the bias from $-C$ rather than from zero, making the normal case of an active node with bias around $C$ quite costly. This encourages the network to use many nodes in the previous layer as an alternative to a bias, preventing us from removing those nodes even if they carry no information beyond that of a bias. 


\subsection{Extension to Batch Normalization} \label{sec:bn}
Many state of the art networks utilize batch normalization or one of its alternatives \cite{Ioffe_2015, ba_2016, Salimans_2016}. Here we consider our NodeDrop condition in a network with batch normalization. Batch normalization is given as follows:
\begin{align*}
\mu &= \frac{1}{m} \sum_{i=1}^m x_i \\
\sigma^2 &= \frac{1}{m} \sum_{i=1}^m (x_i - \mu)^2 \\
\hat{x}_i &= \frac{x_i - \mu}{\sqrt{\sigma^2 + \epsilon}} \\
y_i &= \gamma \hat{x}_i + \beta
\end{align*}
where the sum is over the batch of size $m$, and both $\gamma$ and $\beta$ are learned parameters.  Batch normalization is usually applied between the output of a layer and an activation function. 

To achieve a similar NodeDrop condition for batch normalization as in equation \ref{eq:NodeDrop}, we would like to determine when $y_i \leq 0$. We similarly require an activation with a flat zero region, but no longer require an input $\vec{x}$ between 0 and 1. Therefore, for our batch normalization NodeDrop (NodeDrop-BN) technique we are able to use the popular ReLU activation function. Our NodeDrop-BN condition is given in the following lemma. 

\begin{lemma}
\label{lemma:0}
$| \gamma | \sqrt{m} + \beta \leq 0 \implies y_i \leq 0 $.
\end{lemma}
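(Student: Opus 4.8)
The plan is to reduce the claim to a uniform bound on the normalized activations, namely $|\hat{x}_i| \le \sqrt{m}$ for every index $i$ in the batch. Once this bound is in hand, the lemma follows immediately: since $y_i = \gamma \hat{x}_i + \beta$, we have $y_i \le |\gamma|\,|\hat{x}_i| + \beta \le |\gamma|\sqrt{m} + \beta$, and the hypothesis $|\gamma|\sqrt{m} + \beta \le 0$ finishes it. So the real content is the bound on $\hat{x}_i$.

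To prove $|\hat{x}_i| \le \sqrt{m}$, I would start from the definition $\hat{x}_i = (x_i - \mu)/\sqrt{\sigma^2 + \epsilon}$ and observe that the variance is a sum of nonnegative terms, one of which is $(x_i - \mu)^2/m$; hence $\sigma^2 = \frac{1}{m}\sum_{j=1}^m (x_j - \mu)^2 \ge \frac{1}{m}(x_i - \mu)^2$. Therefore $\sqrt{\sigma^2 + \epsilon} \ge \sqrt{\sigma^2} \ge |x_i - \mu|/\sqrt{m}$, and dividing gives $|\hat{x}_i| = |x_i - \mu|/\sqrt{\sigma^2+\epsilon} \le \sqrt{m}$. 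The degenerate case $x_i = \mu$ (or all batch elements equal, where $\sigma^2 = 0$) is harmless since then $\hat{x}_i = 0 \le \sqrt{m}$, the $\epsilon > 0$ in the denominator keeping the expression well defined.

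Honestly there is no serious obstacle here — the argument is a two-line estimate. If anything, the only subtlety worth stating carefully is that a single summand of a sum of squares is bounded by the whole sum (so that $(x_i-\mu)^2 \le m\sigma^2$), plus the book-keeping to handle $\epsilon$ and the $x_i = \mu$ edge case. I would present the proof in the order: (1) note $\sigma^2 \ge (x_i-\mu)^2/m$; (2) deduce $|\hat{x}_i| \le \sqrt{m}$; (3) bound $y_i \le |\gamma|\,|\hat{x}_i| + \beta \le |\gamma|\sqrt{m} + \beta \le 0$.
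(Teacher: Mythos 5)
Your proof is correct and follows essentially the same route as the paper: both arguments reduce the lemma to the uniform bound $|\hat{x}_i| \le \sqrt{m}$, obtained from the fact that a single squared deviation $(x_i-\mu)^2$ cannot exceed $m\sigma^2 \le m(\sigma^2+\epsilon)$, and then conclude via $y_i = \gamma\hat{x}_i + \beta \le |\gamma|\sqrt{m} + \beta$. The paper phrases the key estimate as $\sum_i \hat{x}_i^2 = m\sigma^2/(\sigma^2+\epsilon) \le m$ before extracting the per-term bound, but this is the same calculation in a slightly different order.
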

\begin{proof}
\begin{align*}
\quad && \text{given }& \hat{x}_i && = \frac{x_i - \mu}{\sqrt{\sigma^2 + \epsilon}} &&\quad \quad\\
\quad && \implies& \hat{x}_i^2 && = \frac{(x_i - \mu)^2}{\sigma^2 + \epsilon} &&\quad \quad\\
\quad && \implies& \sum_{i=0}^m \hat{x}_i^2 &&= \frac{\sum_{i=0}^m(x_i - \mu)^2}{\sigma^2 + \epsilon} &&\quad\quad\\
\quad && & &&= \frac{m \sigma^2}{\sigma^2 + \epsilon} &&\quad\quad\\
\quad && \implies & \sum_{i=0}^m \hat{x}_i^2 && \leq m &&\quad\quad\\
\quad && \implies & -\sqrt{m} &&\leq \hat{x}_i \leq \sqrt{m} &&\quad\quad
\end{align*}
Together $|\hat{x}_i| \leq \sqrt{m}$ and $y_i = \gamma \hat{x}_i + \beta$ imply $y_i \leq | \gamma | \sqrt{m} + \beta$. Therefore 
\[ | \gamma | \sqrt{m} + \beta \leq 0 \implies y_i \leq 0 \]  
\phantom\qedhere
\end{proof}
\vspace{-20pt}

This gives us the NodeDrop-BN condition:
\begin{equation} \label{eq:NodeDrop_BatchNorm}
| \gamma | \sqrt{m} + \beta \leq 0
\end{equation}

Traditionally, batch normalization stores a running mean, $\mu$, and variance, $\sigma^2$ during training. These stored values are then used during testing. Our condition guarantees a node is always off during training, but does not guarantee a node will always be off during testing. We make the assumption that a node which is always off during training should also be off during testing. Thus, we can safely remove these nodes without impact. We experimentally validate this assumption in section \ref{sec:Experiments}. 

The condition in equation \ref{eq:NodeDrop_BatchNorm} implies that so long as we use an activation function where $\sigma(v)=0 \,\, \forall v \leq 0$ (for example ReLU), we can determine if a node is off using only the  batch normalization parameters, $\gamma$ and $\beta$, and the training batch size, $m$. Following the same methodology for regularization as in equation \ref{eq:NodeDropRegularization}, we define our NodeDrop-BN  $L_1$ regularization term as:
\begin{equation} \label{eq:NodeDropRegularizationBN}
    |\gamma|\sqrt{m}| + |\beta + C|
\end{equation}

$L_2$ regularization is generally applied to the layers before batch normalization. Unlike the vanilla NodeDrop regularization, $L_2$ regularization does not interfere with NodeDrop-BN technique, because the $L_2$ regularization applied to layers before a batch normalization has no effect on the output of the batch normalization layer.


\section{Experiments}
\label{sec:Experiments}

\begin{table*}[t]
\caption{MNIST Network Architectures: Number of Features by Layer}
\label{tab:MNISTarchitectures}
\vskip 0.0in
\begin{center}
\begin{small}
\begin{sc}
\begin{tabular}{lccccccc}
\toprule
Network Name & Layer 1  & Layer 2  & Layer 3  & Layer 4& Layer 5 & Layer 6 \\
\,\, & Conv2d  $3 \times 3$& Conv2d  $3 \times 3$ & Conv2d  $3 \times 3$  & Conv2d  $3 \times 3$& Dense & Output\\
\,\, & \,\,  & Maxpool $2\times 2$ & \,\,   & Maxpool $2\times 2$  &\,\,&\,\, \\
\midrule
Dense160 & 16 & 16 & 32 &32 &64 &10 \\
Dense240 & 24 & 24 & 48 &48 &96 &10 \\
Dense320 & 32 & 32 & 64 &64 &128 &10 \\
Dense480 & 48 & 48 & 96 &96 &192 &10 \\
Dense640 & 64 & 64 & 128 &128 &256 &10 \\
\bottomrule

\end{tabular}
\end{sc}
\end{small}
\end{center}
\vskip -0.1in
\end{table*}

\begin{figure*}[h]
\label{fig:MNISTaccsize}
\vskip 0.0in
\begin{center}
\centerline{\includegraphics[width=\textwidth]{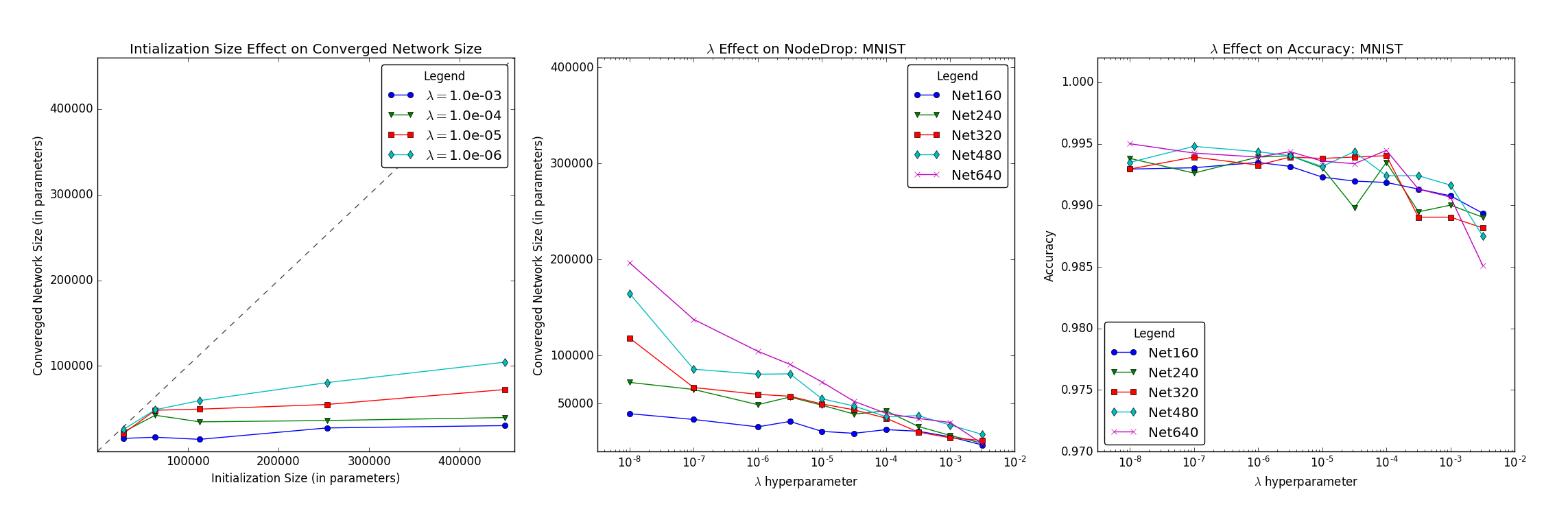}}
\caption{In the right and center figures, the $\lambda$ parameter values plotted on the y-axis are on a logarithmic scale.  We note that the performance and parameter reduction both maintain desirable levels for a large range of $\lambda$ values (over several orders of magnitude). This indicates the ease of tuning the NodeDrop technique.  In the leftmost figure, networks of different starting size converge to nearly the same size for a given $\lambda$. The dashed diagonal line represents networks without pruning. Note that increased initialization size has a slight effect on final size, as indicated by the slight upward slopes. This effect is greater for larger $\lambda$.}
\label{fig:sizesMNIST}
\end{center}
\vskip -0.2in
\end{figure*}

\begin{figure*}[h]
\vskip 0.0in
\begin{center}
\centerline{\includegraphics[width=0.8\textwidth]{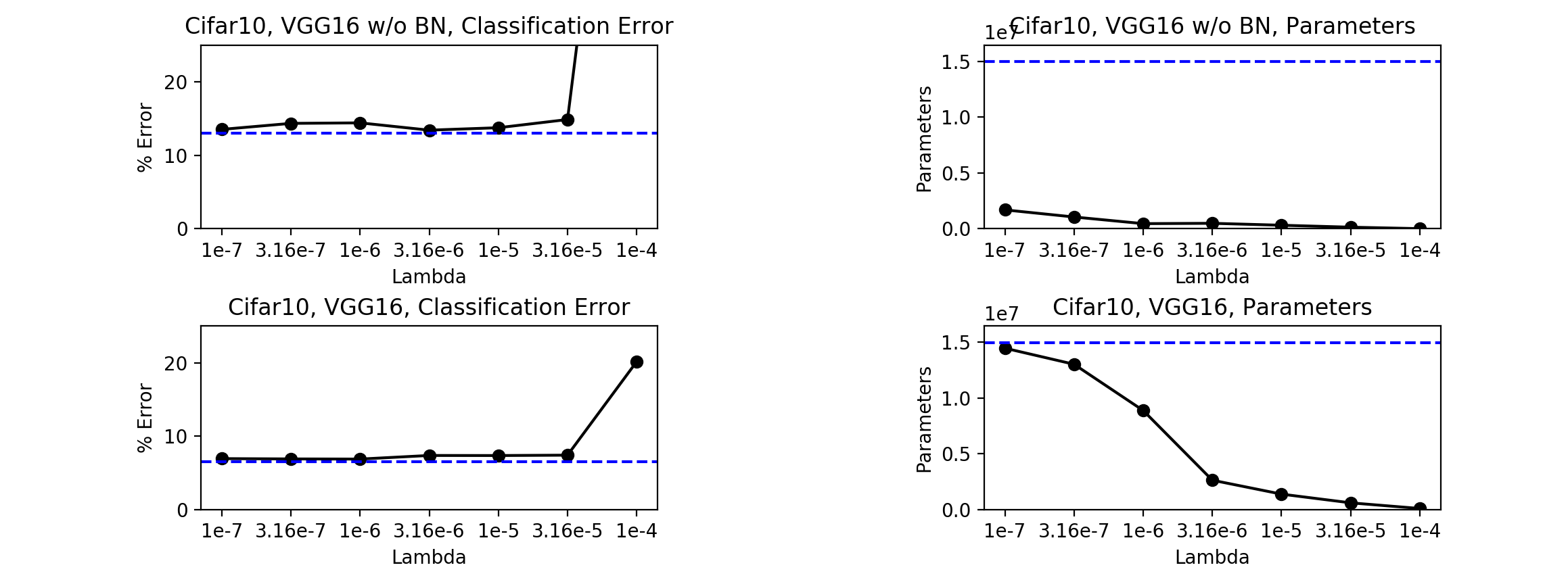}}
\caption{Results on CIFAR10 for VGG with and without Batch Normalization over a spread of $\lambda$ choices. Top Left: Classification error for VGG without Batch Normalization. Top Right: Final parameters after training using NodeDrop. Bottom Left: Classification error for VGG with Batch Normalization. Bottom Right: Final parameters after training using NodeDrop-BN. For both NodeDrop and NodeDrop-BN, a range of $\lambda$ values are acceptable. Baseline accuracy and network size is indicated by the dashed lines.}
\label{fig:cifarParams}
\end{center}
\vskip -0.2in
\end{figure*}

\begin{figure}[h]
\vskip -0.1in
\begin{center}
\centerline{\includegraphics[width=0.8\columnwidth]{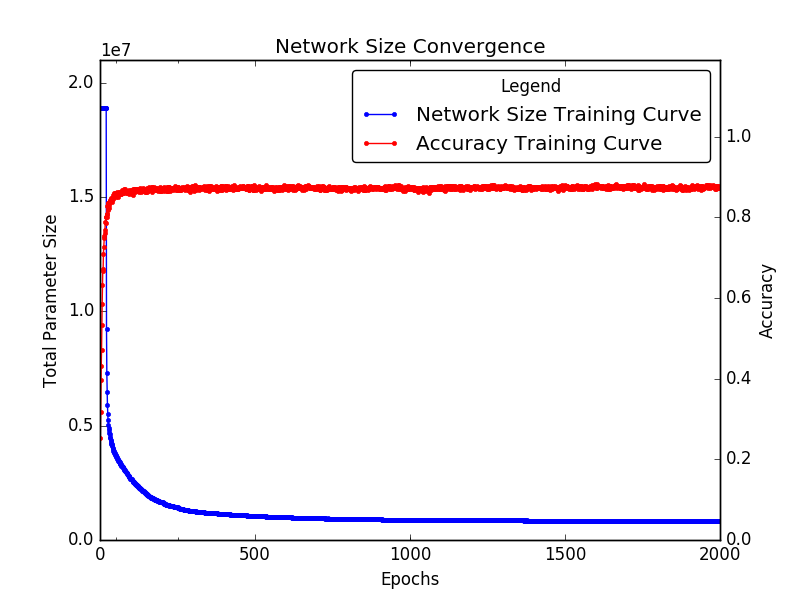}}
\label{fig:epochsCIFAR2}
\caption{Accuracy stabilizes after less than 100 epochs in this CIFAR10 run, indicating the NodeDrop technique does not delay performance convergence. Training for another 400 epochs helps maximize parameter reduction.}
\label{fig:sizesMNIST}
\end{center}
\vskip -0.4in
\end{figure}

\begin{table*}[t]
\caption{Cifar10 Classification Results}
\label{tab:cifar10}
\vskip 0.0in
\begin{center}
\begin{small}
\begin{sc}
\begin{tabular}{lccccccc}
\toprule
Network & $\lambda$ & Test Error & Parameters & Pruned \% & Factor & Nodes & Pruned \% \\
\midrule
\multirow{ 5}{*}{VGG 16 w/o BN} & Baseline & 13.01 & 15.04M & 0.0 & 1.0 & 4736 & 0.0 \\
& $1.0 \times 10^{-6}$ & 14.14 & 0.45M & 97.00 & 33.28 & 1115 & 76.46 \\
& $1.0 \times 10^{-5}$ & 13.27 & 0.31M & 97.96 & 48.98 & 859 & 81.9 \\
& \textbf{$3.2 \times 10^{-5}$} & \textbf{13.76} & \textbf{0.13M} & \textbf{99.12} & \textbf{114.00} & \textbf{612} & \textbf{87.08} \\
& $1.0 \times 10^{-4}$ & 90.00 & 0.0M & 100.0 & - & 0 & 100.0 \\ 
\midrule
\multirow{ 5}{*}{VGG 16} & Baseline & 6.50 & 15.04M & 0.0 & 1.0 & 4736 & 0.0 \\
& $1.0 \times 10^{-6}$ & 6.88 & 8.88M & 40.7 & 1.69 & 3624 & 23.48 \\
& $1.0 \times 10^{-5}$ & 7.36 & 1.39M & 90.75 & 10.81 & 1164 & 75.42 \\
& \textbf{$3.2 \times 10^{-5}$} & \textbf{7.41} & \textbf{0.61M} & \textbf{95.96} & \textbf{24.76} & \textbf{751} & \textbf{84.14} \\
& $1.0 \times 10^{-4}$ & 20.16 & 0.10M & 99.35 & 152.84 & 308 & 93.50 \\
\midrule
\multirow{ 4}{*}{DenseNet40 w/o BN} & Baseline      & 14.94 & 1.04M & 0.0 & 1.0 & 456 & 0.0 \\
& $1.0 \times 10^{-6}$                                                    & 15.21 & 0.66M & 35.69 & 1.55 & 363 & 20.39 \\
& $1.0 \times 10^{-5}$                                                    & 14.74 & 0.41M & 60.47 & 2.54 & 291 & 36.18 \\
& \textbf{$1.0 \times 10^{-4}$}                                           & \textbf{14.99} & \textbf{0.08M} & \textbf{91.96} & \textbf{12.43} & \textbf{154} & \textbf{66.22} \\
\midrule
\multirow{ 4}{*}{DenseNet40} & Baseline     & 6.80 & 1.05M & 0.0 & 1.0 & 456 & 0.0 \\
& $1.0 \times 10^{-6}$                                            & 7.13 & 0.99M & 4.19 & 1.04 & 447 & 1.97 \\
& $1.0 \times 10^{-5}$                                            & 6.75 & 0.98M & 5.67 & 1.06 & 443 & 2.85 \\
& \textbf{$1.0 \times 10^{-4}$}                                   & \textbf{7.79} & \textbf{0.55M} & \textbf{47.12} & \textbf{1.89} & \textbf{333} & \textbf{26.73} \\
\bottomrule
\end{tabular}
\end{sc}
\end{small}
\end{center}
\vskip -0.1in
\end{table*}

\begin{table*}[t]
\caption{Cifar100 Classification Results}
\label{tab:cifar100}
\vskip 0.0in
\begin{center}
\begin{small}
\begin{sc}
\begin{tabular}{lccccccc}
\toprule
Network & $\lambda$ & Test Error & Parameters & Pruned \% & Factor & Nodes & Pruned \% \\
\midrule
\multirow{ 2}{*}{VGG 16} & Baseline & 27.65 & 15.04M & 0.0 & 1.0 & 4736 & 0.0 \\
& $1.0\times 10^{-6}$ & 27.69 & 9.78M & 34.99 & 1.54 & 3914 & 17.35 \\
& \textbf{$1.0 \times 10^{-5}$} & \textbf{28.04} & \textbf{1.83M} & \textbf{87.82} & \textbf{8.21} & \textbf{1623} & \textbf{65.73} \\
& $1.0 \times 10^{-4}$ & 38.49 & 0.46M & 96.93 & 32.58 & 729 & 84.6 \\
\midrule
\multirow{ 4}{*}{DenseNet40} & Baseline & 26.5 & 1.05M & 0.0 & 1.0 & 456 & 0.0 \\
                             & $1.0 \times 10^{-6}$     & 26.92 & 1.05M & 2.27 & 1.02 & 451 & 1.09 \\
                             & \textbf{$1.0 \times 10^{-5}$}     & \textbf{27.01} & \textbf{1.03M} & \textbf{4.74} & \textbf{1.05} & \textbf{445} & \textbf{2.41} \\
                             & $1.0 \times 10^{-4}$     & 29.38 & 0.744M & 31.12 & 1.45 & 376 & 17.54 \\
\bottomrule
\end{tabular}
\end{sc}
\end{small}
\end{center}
\vskip -0.1in
\end{table*}

Having established a theoretical basis for the NodeDrop condition and regularization technique, we will now establish NodeDrop's practical viability as a method for shrinking networks. The NodeDrop technique requires two hyperparameters: $C$ and $\lambda$. The C value is unimportant, and can be set to almost any positive value without impacting results or parameter reduction. However, the $\lambda$ parameter is crucial in determining the balance between learning the objective and dropping nodes. Therefore, we closely examine the effect that choosing different $\lambda$ values has on both network performance and parameter reduction. We test many $\lambda$ values on the MNIST and CIFAR10 datasets. We also test a few $\lambda$ values on the CIFAR100 dataset.

The network initalization size should affect the number of nodes dropped. We show that if a network starts near optimal size, NodeDrop will maintain accuracy and only drop what few nodes it can. Furthermore, we show that if a network is grossly oversized at initialization, NodeDrop will drop many nodes and converge towards the same size as a smaller network initialization. This result is desirable, as it demonstrates NodeDrop is largely unaffected by poor layer size choices. NodeDrop uses $\lambda$ to determine the balance between performance and number of nodes utilized. Therefore, a network architect using NodeDrop can afford to initialize a large network, and remain confident that NodeDrop will eliminate needless nodes. Using the MNIST dataset, we demonstrate this ability by showing that networks will converge to the same size from multiple initialization sizes, for a fixed $\lambda$.

Many pruning methods require an increase in training time to be effective. The NodeDrop technique does not delay performance or accuracy convergence, but in order to allow the number of network nodes to converge, one must train for a longer time. We examine the training time required for this convergence with experiments on the CIFAR10 datasets.

Most importantly we test to ensure NodeDrop maintains performance and effectively drops nodes. We find that NodeDrop regularization does not affect a network's performance for a large swath of $\lambda$ values, only reducing testing accuracy if extreme $\lambda$ values are chosen. 

Furthermore, we demonstrate that NodeDrop is able to drop more than $100$x parameters from popular networks such as VGG16, while continuing to maintain classification accuracy on the CIFAR10 dataset. We test NodeDrop network performance and parameter reduction on MNIST, CIFAR10, and CIFAR100.

\subsection{MNIST Experiments}
\label{sec:MNIST_experiments}

The MNIST dataset \cite{mnist} provides an opportunity to perform a large number of experiments because of the datasets rapid accuracy convergence. Thus, we used this dataset to sweep across $\lambda$ values for five differently sized, but otherwise similar, network architectures, as shown in table \ref{tab:MNISTarchitectures}. We demonstrate NodeDrop's ability to rapidly converge to similarly sized networks from different starting sizes.

For all MNIST experiments we used a simple network design: four convolution layers and a single fully connected layer. We used 3x3 filters in all convolution layers, and performed max-pooling after every second convolution layer. We varied the width of the layers in order to test the effects of changing network initialization size. We did not investigate the effects of changing network depth, but suspect that prudent selection of network depth remains important. The network architectures are described in table \ref{tab:MNISTarchitectures}. The following consistent hyperparameters were used across all MNIST runs: learning rate $=1.0 \times 10^{-3}$, batch size $=1024$, optimizer $=Adam$, loss function$=$\textit{cross entropy}, epochs$=480$.

\subsubsection{Choosing Lambda}
\label{sec:chooselambdamnist}
Choosing an appropriate value for the NodeDrop's $\lambda$ parameter remains an important task. In order to prove that the NodeDrop technique remains robust for many selections of $\lambda$, we tested five different network initialization sizes to observe differences in convergence across $\lambda$ values. The network architectures and hyperparameters are discussed in section \ref{sec:MNIST_experiments}. We tested ten different $\lambda$ values between $\lambda=1.0\times 10^{-8}$ and $\lambda=1.0 \times 10^{-3}$. 

Our results indicate that easy tuning is a benefit of the NodeDrop technique. We found that $\lambda$ selections across orders of magnitude yielded desirable results, as shown in figure \ref{fig:MNISTaccsize}. For $\lambda > 10^{-4}$ we noticed a drop in MNIST accuracy, and for $\lambda < 1.0 \times 10^{-7}$ we judged there to be a significant sacrifice in parameter reduction. Choosing appropriate $\lambda$ will always be dependent on both application and loss function. Because of these MNIST experiments, we expect that the NodeDrop technique is robust for a large range of $\lambda$ selections. For a network designer using the popular cross-entropy loss objective function, as we did, we would suggest $\lambda=1.0 \times 10^{-5}$.

\subsubsection{Network Sizes}


In the previous section (\ref{sec:chooselambdamnist}) we experimentally observed that tuning the $\lambda$ parameter of the NodeDrop technique should not cause a network designer grief. In this section, we will experimentally observe that choosing initialization layer sizes should also prove easy. We use the same experiments from the previous section (\ref{sec:chooselambdamnist}), but instead plot the effect of initializing with differently sized networks. This plot, shown in figure \ref{fig:MNISTaccsize}, demonstrated that the NodeDrop technique will converge to a similar ``equilibrium'' from many differently sized initialization networks.  The size of the final network is instead mostly dependent on $\lambda$. A network designer should err towards too large a network in order to ensure desirable performance.



\subsection{CIFAR10 and CIFAR100 Experiments}

\subsubsection{Dataset}
The CIFAR dataset \cite{cifar10} consists of $32$x$32$ colored natural images. Both CIFAR10 and CIFAR100 are designed for classification, containing $10$ and $100$ classes respectively. There are $50,000$ training images and $10,000$ testing images for both. We adopt a standard data augmentation scheme where the training images are shifted and mirrored horizontally \cite{He_2016, Liu_2017}. 

\subsubsection{Architectures and Training}
We implement our technique on two standard models, VGG \cite{Simonyan_2014} and DenseNet \cite{Huang_2017}. Our VGG network is a slight variant of the standard VGG16 model. We follow the standard modification of VGG for CIFAR \cite{Liu_2017, Molchanov_2017_vd}, by removing the $3$ final fully connected layers of size $4096$ and instead using only a single fully connected layer of size $512$. We train the network using SGD with momentum of $0.9$. The network is trained for $200$ epochs with an initial learning rate of 0.1 which is decayed by $0.1$ at epochs 80 and 130.  We tested both with and without batch normalization, and discovered that batch normalization is necessary for the large VGG16 initialization when applied to the more difficult CIFAR100 dataset. Therefore results without batch normalization are excluded for CIFAR100.

For DenseNet we implement the standard DenseNet-40 given in the original paper with $L=40$ and $k=12$. We train the model as per the original paper with SGD and momentum 0.9. The network is trained for $300$ epochs with an initial learning rate of 0.1 and is decayed by 0.1 at epochs 150 and 225. As with VGG we found that the CIFAR100 dataset required batch normalization, but we were again able to train a variant on CIFAR10 without batch normalization.  

\subsubsection{Lambda Parameter Tests}
As with the MNIST experiments, we tested a range of $\lambda$'s on CIFAR10 in order to determine the choices which suit the network and dataset well. Furthermore, here we test NodeDrop-BN, which was not tested in the MNIST experiments. Results for VGG on CIFAR10 with varying choices of $\lambda$ are shown in figure \ref{fig:cifarParams}. 

For the case without batch normalization our network maintains performance and prunes a large number of nodes over many choices of $\lambda$. As with the MNIST case, this suggests that choosing $\lambda$ is relatively easy. All choices of $\lambda \leq 3.2 \times 10^{-5}$ achieved high performance with significant pruning. For $\lambda \geq 1.0 \times 10^{-4}$ the regularization parameter proved too high, causing an entire layer to turn off, which in turn caused the network to turn off all other layers.

For NodeDrop-BN, we find that $\lambda \leq 3.2\times 10^{-5}$ is appropriate for maintaining performance. However, NodeDrop-BN requires more precise tuning than NodeDrop, as only $\lambda \geq 3.2 \times 10^{-6}$ achieved desirable parameter reduction. Based on the above results we continue to recommend an initial lambda setting of $\lambda = 1 \times 10^{-5}$ for the cross-entropy loss objective function.

\subsubsection{Network Convergence Time}
Sometimes it is important to avoid needlessly extending training time. In this section we analyze NodeDrop's effect on training time. Using $\lambda=10^{-5}$, we train a network for 2000 epochs in order to observe network parameter and performance convergence over time. This experiment used the VGG16 network without batch normalization on the CIFAR10 dataset. Our results, shown in figure \ref{fig:epochsCIFAR2}, indicate that while accuracy convergence is not delayed by the NodeDrop technique, one will need to wait longer to maximize NodeDrop's parameter reduction.

\subsubsection{Parameter Reduction}

Results for CIFAR10 and CIFAR100 are given in tables \ref{tab:cifar10} and \ref{tab:cifar100} respectively. We highlight the rows which provide the highest parameter reduction while maintaining high accuracy. 

For the VGG network we are able to drop a significant number of parameters without degradation to the accuracy of the network. For NodeDrop-BN, we can prune $95$ percent of the parameters for CIFAR10 and $88$ percent for CIFAR100. For vanilla NodeDrop, we can prune $99$ percent of the parameters on CIFAR10. This suggests that VGG is a significantly oversized network for application to the CIFAR datasets. 

It is more difficult to prune nodes from the DenseNet architecture than for VGG. We are only able to prune approximately $5$ percent of the parameters from DenseNet on CIFAR100. We believe this suggests that the DenseNet architecture is already well sized for CIFAR100. DenseNet starts at around $1$ million parameters, which is close to the number of remaining parameters after our best case pruning of the VGG network.

\section{Conclusion and Outlook}
\label{Conclusion}
In this paper, we proposed the novel NodeDrop technique for reducing parameters in neural networks. The NodeDrop technique consists of a condition for identifying nodes which are guaranteed to carry no information, and a regularization term to encourage this condition to be met. We also propose a modified version of NodeDrop, NodeDrop-BN, for use in networks with batch normalization. Experiments on the MNIST and CIFAR10 datasets show that NodeDrop does not significantly increase training time, and facilitates network design with the easily tuneable hyperparameter $\lambda$. With experiments on MNIST, CIFAR10, and CIFAR100 datasets, using VGG16 and DenseNet architectures, we demonstrate that NodeDrop compares favorably with other parameter reduction techniques. NodeDrop reduces the number of parameters in a network by up to a factor of 114x. We hope that NodeDrop and NodeDrop-BN will prove useful in neural network design, and will help to make the implementation of neural networks on constrained systems more practical.


\clearpage

\bibliography{main}
\bibliographystyle{icml2019}

\end{document}